\documentclass[twoside,11pt]{article}
\usepackage{pgm}


\usepackage[utf8]{inputenc}
\inputencoding{utf8}

\usepackage{hyperref,color,soul,booktabs}
\setulcolor{blue}
\newcommand{\BibTeX}{\textsc{B\kern-0.1emi\kern-0.017emb}\kern-0.15em\TeX}


\ShortHeadings{A Proof of the Front-Door Adjustment Formula}{Javidian and Valtorta}

\begin{document}
\title{A Proof of the Front-Door Adjustment Formula}

\author{\Name{Mohammad Ali Javidian} \Email{javidian@email.sc.edu}\and
	\Name{Marco Valtorta} \Email{mgv@cse.sc.edu}\\
	\addr Department of Computer Science \& Engineering, University of South Carolina, Columbia, SC, 29201, USA.}

\maketitle

\begin{abstract}
	We provide a proof of the the Front-Door adjustment formula using the $do$-calculus.  
\end{abstract}
\begin{keywords}
	 causal Bayesian networks; semi-Markovian models; identifiability; latent variables; causal effect; causal inference.
\end{keywords}
\section{Introduction}

In \citep{pj},  a formula for computing the causal effect of $X$ on $Y$ in the causal model of figure \ref{Fig:fd1} is derived and used  to motivate the definition of front-door criterion.  Pearl then states, without proof, the Front-Door Adjustment Theorem~\citep[Theorem 3.3.4]{pj}. In section 3.4.3, he provides a symbolic derivation of the front door adjustment formula for the same example from the $do$-calculus. In this short technical report, we provide a proof of Theorem 3.3.4 using the $do$-calculus.  
\begin{figure}[h]
	\centering
	\includegraphics[scale=.5]{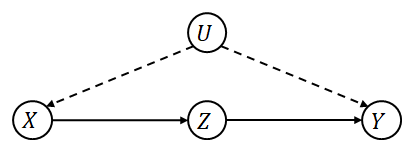}
	\caption{A causal Bayesian network with a latent variable $U$.} \label{Fig:fd1}
\end{figure}
The next section consists of the proof of the front-door adjustment formula; the theorem is restated for the reader's convenience.
The $do$-calculus rules, the back-door criterion, the back-door adjustment formula, and the front-door criterion are in the slide set provided as an ancillary document.

\section{Front-Door Adjustment Theorem}

\begin{theorem}[Front-Door Adjustment]
	If a set of variables $Z$ satisfies the front-door criterion relative to $(X, Y)$ and if $P(x,z)>0$, then the causal
	effect of $X$ on $Y$ is identifiable and is given by the formula
	\begin{equation}\label{fd}
	P(y|\hat{x})=\sum_{z}^{}P(z|x)\sum_{x'}^{}P(y|x',z)P(x')
	\end{equation}.
\end{theorem}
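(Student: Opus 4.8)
The plan is to compute $P(y \mid \hat{x})$ by decomposing the query twice and discharging each resulting factor with one of the three $do$-calculus rules, using exactly the three clauses of the front-door criterion to certify the required $d$-separations in the appropriate mutilated graphs. Throughout I write $G$ for the causal diagram and $G_{\overline{X}}$, $G_{\underline{X}}$ for the graphs obtained by deleting, respectively, all arrows into and all arrows out of $X$ (and similarly for combined subscripts).

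First I would establish the intermediate identity $P(y\mid\hat{x}) = \sum_z P(z\mid x)\, P(y\mid\hat{z})$. Conditioning on $Z$ gives $P(y\mid\hat{x}) = \sum_z P(y\mid\hat{x},z)\,P(z\mid\hat{x})$. For the second factor, the clause ``no back-door path from $X$ to $Z$'' is precisely the statement that $Z$ and $X$ are $d$-separated in $G_{\underline{X}}$, so Rule 2 yields $P(z\mid\hat{x}) = P(z\mid x)$. For the first factor I would raise the observation $z$ to an intervention and then delete $\hat{x}$: Rule 2 gives $P(y\mid\hat{x},z) = P(y\mid\hat{x},\hat{z})$, justified by the clause that $X$ blocks all back-door paths from $Z$ to $Y$ (so that $(Y \perp Z \mid X)$ holds in $G_{\overline{X}\,\underline{Z}}$), and Rule 3 then gives $P(y\mid\hat{x},\hat{z}) = P(y\mid\hat{z})$, justified because once $Z$ is intervened upon, $X$ retains no connection to $Y$ (every directed path from $X$ to $Y$ being intercepted by $Z$), so $(Y \perp X \mid Z)$ holds in the relevant mutilated graph.

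Next I would expand the remaining factor as $P(y\mid\hat{z}) = \sum_{x'} P(y\mid\hat{z},x')\,P(x'\mid\hat{z})$. Here $P(x'\mid\hat{z}) = P(x')$ by Rule 3, since deleting the arrows into $Z$ disconnects $Z$ from $X$ (the only candidate path being the back-door route, blocked at the collider $Y$), giving $(X \perp Z)$ in $G_{\overline{Z}}$; and $P(y\mid\hat{z},x') = P(y\mid z,x')$ by Rule 2, again because conditioning on $X$ blocks the back-door path from $Z$ to $Y$, i.e.\ $(Y \perp Z \mid X)$ in $G_{\underline{Z}}$. Substituting the two displays into one another then yields the front-door formula~(\ref{fd}).

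The main obstacle I anticipate is not the algebra but the careful verification of the $d$-separation side conditions in each mutilated graph, and in particular keeping the ancestor bookkeeping of Rule 3 straight. The conceptual subtlety is the latent confounder $U$: direct adjustment on $X$ is impossible because $U$ is unobserved, so each step must be shown never to require controlling $U$. The most delicate points are the two places where conditioning on $X$ is meant to block the $Z$-to-$Y$ back-door path, where I must confirm that $X$ functions as a blocking non-collider and that deleting arrows into $X$ (in the Rule 2 step on $G_{\overline{X}\,\underline{Z}}$) does not open any new path through a collider; the front-door clauses are exactly what guarantee this.
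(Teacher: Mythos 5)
Your proposal is correct and follows essentially the same route as the paper: the same decomposition $P(y\mid\hat{x})=\sum_z P(y\mid\hat{x},z)P(z\mid\hat{x})$, the same Rule~2 step for $P(z\mid\hat{x})=P(z\mid x)$ via condition (ii), the same Rule~2/Rule~3 detour through $P(y\mid\hat{x},\hat{z})$ to reach $P(y\mid\hat{z})$, and the same back-door-style evaluation $P(y\mid\hat{z})=\sum_{x'}P(y\mid x',z)P(x')$ via Rule~3 and Rule~2. The $d$-separation justifications you cite in each mutilated graph match the paper's step by step.
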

\begin{proof}
	By well known probability identities (for example, the Fundamental Rule and the Theorem of Total Probability), $P(y|\hat{x})= \sum_{z}^{}P(y|z,\hat{x})P(z|\hat{x})$. In Step 1, below, we show how to compute $P(z|\hat{x})$ using only observed quantities.  In Steps 2 and 3, we show how to compute $P(y|z,\hat{x})$ using only observed quantities; this part of the proof is by far the hardest.
	
\begin{itemize}
\item Step 1: Compute $P(z|\hat{x})$
\begin{itemize}
	\item $X\!\perp\!\!\!\perp Z$ in $G_{\underline{X}}$ because there is no outgoing edge from $X$ in $G_{\underline{X}}$, and also by condition (ii) of the definition of the front-door criterion, all back-door paths from $X$ to $Z$ are blocked.
	\begin{figure}[h]
		\centering
		\includegraphics[scale=1.00]{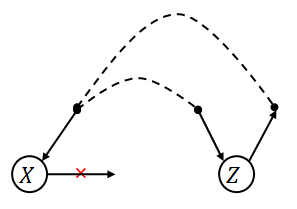}
		\label{Fig:4}
	\end{figure}
	\item $G$ satisfies the applicability condition
	for Rule 2: $$P(y|\hat{x},\hat{z},w)=P(y|\hat{x},z,w)\textrm{\quad if\quad } (Y\!\perp\!\!\!\perp Z|X,W)_{G_{\overline{X}\underline{Z}}}.$$
	\item In Rule 2, set $y=z, x=\o, z=x, w=\o$:
	\begin{equation} \label{step1}
	P(z|\hat{x}) = P(z|x)
	\end{equation} $\textrm	{\quad because \quad} (Z\!\perp\!\!\!\perp X)_{G_{\underline{X}}}$
	
\end{itemize}

\item Step 2:  $P(y|\hat{z})$
\begin{itemize}
\item $P(y|\hat{z})=\sum_{x}^{}P(y|x,\hat{z})P(x|\hat{z}).$

\item $X\!\perp\!\!\!\perp Z$ in $G_{\overline{Z}}$ because there is no incoming edge to $Z$ in $G_{\overline{Z}}$, and also all paths from $X$ to $Z$ either by condition (ii) of the definition of the front-door criterion (blue-type paths), or because of existence of a collider node on the path (green-type paths) are blocked.
\begin{figure}[h]
	\centering
	\includegraphics[scale=1.0]{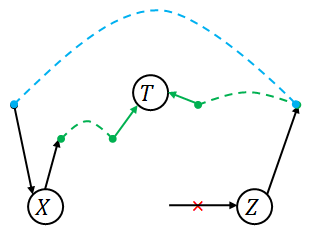}
	\label{Fig:4}
\end{figure}
\item $G$ satisfies the applicability condition
	for Rule 3: $$P(y|\hat{x},\hat{z},w)=P(y|\hat{x},w)\textrm{\quad if\quad } (Y\!\perp\!\!\!\perp Z|X,W)_{G_{\overline{X},\overline{Z(W)}}}.$$
\item In Rule 3, set $y=x, x=\o, z=z, w=\o$:
$$P(x|\hat{z}) = P(x) \textrm{\quad because \quad} (Z\!\perp\!\!\!\perp X)_{G_{\overline{Z}}}.$$
\item $(Z\!\perp\!\!\!\perp Y|X)_{G_{\underline{Z}}}$ because there is no outgoing edge from $Z$ in $G_{\underline{Z}}$, and also by condition (iii) of the definition of the front-door criterion, all back-door paths from $Z$ to $Y$ are blocked by $X$.
\begin{figure}[h]
\centering
\includegraphics[scale=1.00]{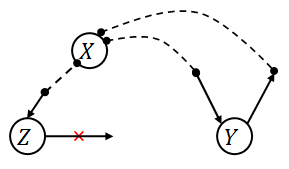}
\label{Fig:5}
\end{figure}
\item $G$ satisfies the applicability condition
for Rule 2: $P(y|\hat{x},\hat{z},w)=P(y|\hat{x},z,w)\textrm{\quad if\quad } (Y\!\perp\!\!\!\perp Z|X,W)_{G_{\overline{X}\underline{Z}}}.$
\item In Rule 2, set $y=y, x=\o, z=z, w=x$:
$$P(y|x,\hat{z}) = P(y|x,z) \textrm{\quad because \quad} (Z\!\perp\!\!\!\perp Y|X)_{G_{\underline{Z}}}.$$

\item
\begin{equation} \label{step2}
P(y|\hat{z})= \sum_{x}^{}P(y|x,\hat{z})P(x|\hat{z})=\sum_{x}^{}P(y|x,{z})P(x)
\end{equation}
This formula is a special case of the back-door formula. 
\end{itemize}

\item Step 3: Compute $P(y|\hat{x})$

As already noted at the beginning of the proof, $P(y|\hat{x})= \sum_{z}^{}P(y|z,\hat{x})P(z|\hat{x}).$
\begin{itemize}
	\item $P(z|\hat{x}) = P(z|x)$, as shown in Step 1 (see equation (\ref{step1}))
\end{itemize}
There is no rule of the $do$-calculus that allows the elimination of the hat from $P(y|z,\hat{x})$, so we take a circuitous route: we first replace an observation ($z$) with an intervention ($\hat{z}$) using Rule 2, and then remove an intervention variable ($\hat{z}$) using Rule 3.
\begin{itemize}
	\item $(Y\!\perp\!\!\!\perp Z|X)_{G_{\overline{X}\underline{Z}}}$ because there is no outgoing edge from $Z$ in $G_{\overline{X}\underline{Z}}$, and also by condition (iii) of the definition of the front-door criterion, all back-door paths from $Z$ to $Y$ are blocked by $X$.
	\begin{figure}[h]
		\centering
		\includegraphics[scale=1.00]{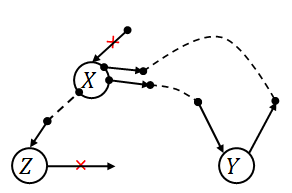}
		\label{Fig:6}
	\end{figure}
	\item $G$ satisfies the applicability condition
		for Rule 2: $P(y|\hat{x},\hat{z},w)=P(y|\hat{x},z,w)\textrm{\quad if\quad } (Y\!\perp\!\!\!\perp Z|X,W)_{G_{\overline{X}\underline{Z}}}.$
	\item In Rule 2, set $y=y, x=x, z=z, w=\o$ :
	$$P(y|z,\hat{x}) = P(y|\hat{z},\hat{x}) \textrm{\quad because \quad} (Y\!\perp\!\!\!\perp Z|X)_{G_{\overline{X}\underline{Z}}}.$$
	\item $(Y\!\perp\!\!\!\perp X|Z)_{G_{\overline{X}\overline{Z}}}$ because there is no incoming edge to $X$ in $G_{\overline{X}\overline{Z}}$, and also all paths from $X$ to $Y$  are blocked either because of condition (i) of the definition of the front-door criterion (blue-type paths)[directed paths from $X$ to $Y$], or because of the existence of a collider on the path (green-type paths) (note that the case $T\in Z$ cannot happen because there is no incoming edge to $Z$ in $G_{\overline{X}\overline{Z}}$).
	\begin{figure}[h]
		\centering
		\includegraphics[scale=1.00]{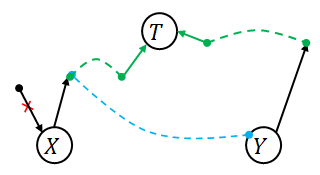}
		\label{Fig:7}
	\end{figure}

	\item $G$ satisfies the applicability condition for Rule 3: $$P(y|\hat{x},\hat{z},w)=P(y|\hat{x},w)\textrm{\quad if\quad } (Y\!\perp\!\!\!\perp Z|X,W)_{G_{\overline{X},\overline{Z(W)}}}.$$
	\item In Rule 3, set $y=y, x=z, z=x, w=\o$:
	$$P(y|\hat{z},\hat{x})=P(y|\hat{z}) \textrm{\quad because \quad} (Y\!\perp\!\!\!\perp Z|X)_{G_{\overline{X}\overline{Z}}}.$$
\end{itemize}

Now, by equations (\ref{step1}) and (\ref{step2}),

 $$P(y|\hat{x})= \sum_{z}^{}P(y|z,\hat{x})P(z|\hat{x})=\sum_{z}^{}P(z|x)\sum_{x'}^{}P(y|x',z)P(x').$$

\end{itemize}
\end{proof}
\section*{Acknowledgements}
This work has been partially supported by Office of Naval Research grant ONR N00014-17-1-2842.  This research is based upon work supported in part by the Office of the Director of National
Intelligence (ODNI), Intelligence Advanced Research Projects Activity (IARPA), award/contract
number 2017-16112300009. The views and conclusions contained therein are those of the authors
and should not be interpreted as necessarily representing the official policies, either expressed or implied,
of ODNI, IARPA, or the U.S. Government. The U.S. Government is authorized to reproduce
and distribute reprints for governmental purposes, notwithstanding annotation therein.

\bibliography{references}

\begin{thebibliography}{1}
\providecommand{\natexlab}[1]{#1}
\providecommand{\url}[1]{\texttt{#1}}
\expandafter\ifx\csname urlstyle\endcsname\relax
  \providecommand{\doi}[1]{doi: #1}\else
  \providecommand{\doi}{doi: \begingroup \urlstyle{rm}\Url}\fi

\bibitem[Pearl(2009)]{pj}
J.~Pearl.
\newblock \emph{Causality: Models, Reasoning, and Inference}.
\newblock Cambridge University Press, 2009.

\end{thebibliography}
\end{document}